\title[BoTW Analysis for Linear Bandits with FTRL algorithm]{Best-of-three-worlds Analysis for Linear Bandits with Follow-the-regularized-leader Algorithm}
\newcommand{\sets}[1]{\left\{ #1 \right\}}
\newcommand{\trace}{\mathrm{trace}}
\newcommand{\abs}[1]{\left| #1 \right|}
\newcommand{\inner}[1]{\left< #1 \right>}
\newcommand{\bracket}[1]{\left(#1\right)}
\newcommand{\norm}[1]{\left\| #1 \right\|}
\newcommand{\EE}[1]{\mathbb{E} \left[#1\right]}
\DeclareMathOperator*{\argmin}{argmin}
\mathchardef\mhyphen="2D
\thanks{Corresponding author.} \Email{shuaili8@sjtu.edu.cn}\\
\begin{document}

\maketitle


\begin{abstract}
    The linear bandit problem has been studied for many years in both stochastic and adversarial settings. Designing an algorithm that can optimize the environment without knowing the loss type attracts lots of interest. \citet{LeeLWZ021} propose an algorithm that actively detects the loss type and then switches between different algorithms specially designed for specific settings. However, such an approach requires meticulous designs to perform well in all environments. Follow-the-regularized-leader (FTRL) is another type of popular algorithm that can adapt to different environments. This algorithm is of simple design and the regret bounds are shown to be optimal in traditional multi-armed bandit problems compared with the detect-switch type. Designing an FTRL-type algorithm for linear bandits is an important question that has been open for a long time. In this paper, we prove that the FTRL algorithm with a negative entropy regularizer can achieve the best-of-three-world results for the linear bandit problem. Our regret bounds achieve the same or nearly the same order as the previous detect-switch type algorithm but with a much simpler algorithmic design. 
\end{abstract}

\begin{keywords}
  Best-of-three-worlds, Linear bandits, Follow-the-regularized-leader, Negative entropy
\end{keywords}


\section{Introduction}




The linear bandit problem is a fundamental sequential decision-making problem, in which the learner needs to choose an arm to pull and will receive a loss of the pulled arm as a response in each round $t$ \citep{Auer02,AwerbuchK04,DaniHK07,DaniHK08,Abbasi-YadkoriPS11,ChuLRS11,BubeckCK12}. Particularly, the expected value of the loss of each arm is the inner product between the feature of this arm and an \textit{unknown} loss vector. 
The learning objective is to minimize the \textit{regret}, defined as the difference between the learner's cumulative loss and the loss of some best arm  in hindsight. There are two common environments characterizing the schemes of the loss received by the learner.
In the stochastic environment, the loss vector is fixed across all rounds and the learner only observes a stochastic loss 
sampled from a distribution with the expectation being the inner product between the feature vector of the pulled arm and the loss vector. 
In this line of research, it has been shown that the instance-optimal regret bound is of order $O(\log T)$ (omitting all the dependence on other parameters) \citep{lattimore2017end}.
In the adversarial environment, the loss vector can arbitrarily change in each round. In this case, the minimax-optimal regret has been proved to be $O(\sqrt{T})$ \citep{BubeckCK12}.

While several algorithms are able to achieve the above (near) optimal regret guarantees in either stochastic environment \citep{lattimore2017end} or adversarial environment \citep{BubeckCK12}, these algorithms require prior knowledge about the regime of the environments. If such knowledge is not known a priori, the algorithms tailored to the adversarial environment will have too conservative regret guarantees in a stochastic environment. Also, without knowing such knowledge, the algorithms tailored to the stochastic environment may have linear regret guarantees in the adversarial environment and thus completely fail to learn. To cope with these issues, recent advances have emerged in simultaneously obtaining the (near) optimal regret guarantees in both the stochastic and adversarial environments, which is called the best-of-both-worlds (BoBW) result. 
For the special case of multi-armed bandit (MAB) problems,
algorithms with (near) optimal regret in both adversarial and stochastic environments
have been proposed \citep{BubeckS12,SeldinS14,AuerC16,SeldinL17,WeiL18,ZimmertS19,Ito21Parameter}.
In general, these algorithms fall into two lines.
The first line is based on conducting statistical tests to distinguish between the stochastic and adversarial environments and use the specially designed algorithm for the detected setting \citep{BubeckS12,AuerC16}.
The other approach is finding an appropriate regularizer for the follow-the-regularized-leader (FTRL) type algorithm which is originally designed for adversarial bandits to achieve improved regret guarantee in the stochastic environment, without compromising the adversarial regret guarantee \citep{SeldinS14,SeldinL17,WeiL18,ZimmertS19,Ito21Parameter}.
Compared with the first line,
the second approach has better adaptivity since it does not need to detect the nature of the environment and is also shown to achieve the optimal result in both environments as well as the intermediate corruption setting \citep{ZimmertS19}, which are called the best-of-three-worlds (BoTW) results.
This approach also attracts great interest in more general problems including combinatorial MAB \citep{ZimmertLW19,Ito21}, graph feedback \citep{ErezK21,Ito2022nearly} and Markov decision processes (MDPs) \citep{JinL20,JinHL21}. 
But for the canonical linear bandit problem, only the detect-switch-based method \citep{LeeLWZ021} is available. 
A natural question is still open:


\textit{Does there exist an FTRL-type algorithm for linear bandits, with the same adaptivity to both the adversarial and (corrupted) stochastic environments?}

In this paper, we give an affirmative answer to the above question. We investigate the FTRL algorithm with a negative entropy regularizer and show that such an algorithm can simultaneously achieve $O(\log T)$ regret in the stochastic setting and $O(\sqrt{T})$ regret in the adversarial setting.
Our work provides a more concise algorithm for the BoTW problem in linear bandits given that only a detect-switch-based method was available previously. 
Table \ref{table:comparison} provides the comparison between our work and previous works. 
As can be seen, our results achieve the same or nearly the same results compared with \citet{LeeLWZ021}, but with a much simpler algorithmic design. 
On the other hand, our results also unleash the potential of the FTRL-type algorithms to solve problems with linear structures. 
As a preliminary step, we believe the analysis for linear bandit also contributes some new ideas for the BoTW problem in other online learning models with linear structures such as linear MDPs.

\section{Related Works}

 \begin{table*}[htb]
\centering
\begin{threeparttable}
\caption{Comparisons of regret bounds with state-of-the-art results in different environments for linear bandits. $T$ is the horizon, $D$ is the finite arm set, $C$ is the corruption level,  $\Delta_{\min}$ is the minimum loss gap between the optimal and sub-optimal arms, $\ell$ is the loss vector in the stochastic setting, and $c(D,\ell)$ is the solution to the optimization problem in the lower bound of stochastic linear bandits with the natural upper bound $d/\Delta_{\min}$.  }\label{table:comparison}
\begin{tabular}{lll}
\toprule 
 & Stochastic with corruptions & Adversarial \\\hline
\rule{0pt}{15pt} \citet{BubeckCK12} &    & $\displaystyle  O\bracket{\sqrt{dT\log |D|}}$  \\\hline 
\rule{0pt}{15pt} \citet{lattimore2017end} & $\displaystyle  O\bracket{c(D,\ell)\log T}~(C=0)$    &  \\\hline 
\rule{0pt}{15pt} \citet{LeeLWZ021} & $\displaystyle O\bracket{c(D,\ell)\log^2 T+C}$ & $\displaystyle  O\bracket{\sqrt{dT\log^2 T}}$  \\\hline
\rule{0pt}{15pt} Ours & $\displaystyle O\bracket{{d\log^2 T}/{\Delta_{\min}}+\sqrt{{Cd\log^2 T}/{\Delta_{\min}}}}$ &  $\displaystyle O\bracket{\sqrt{dT\log^2 T }}$ \\
\bottomrule 
\end{tabular}
\begin{tablenotes}
\item[1] 
Particular care is required when comparing the results in the corrupted setting due to differences in the detailed problem definitions.
\end{tablenotes}
\end{threeparttable}
\end{table*}

\paragraph{Linear bandits.}

Linear bandit is a fundamental model in online sequential decision-making problems. Its stochastic setting is originally proposed by \citet{AbeL99}, and first solved by \citet{Auer02} using the optimism principle. Subsequently, the regret guarantees of this problem have been further improved by several algorithms \citep{DaniHK08,RusmevichientongT10,Abbasi-YadkoriPS11,ChuLRS11}. The adversarial setting dates back to \citet{AwerbuchK04}. The first $O(\sqrt{T})$ expected regret bound is obtained by the Geometric Hedge algorithm proposed by \citet{DaniHK07}. Latter, \citet{AbernethyHR08} establish the first computationally efficient algorithm achieving $\widetilde{O}(\sqrt{T})$ regret guarantee based on the FTRL framework. 
\citet{BubeckCK12} further obtain the regret guarantee that is minimax optimal up to a logarithmic factor. 
The recent work of \citet{LeeLWZ021} achieves the (near) optimal instance-dependent regret and minimax optimal regret in stochastic and adversarial environments respectively, via establishing a detect-switch type algorithm.
For linear bandits in a corrupted stochastic setting, \citet{Li2019stochastic} first obtain an instance-dependent regret guarantee, with an additional corruption term depending on the corruption level $C$  linearly and on $T$ logarithmically. Subsequently, \citet{BogunovicL0S21} attain the instance-independent bound in the same setting. For the more general setting of  linear contextual
bandits where the arm set may vary in each round and can even be infinite, \citet{Zhao2021linear,0002HS22,henearly} establish the sublinear instance-independent and/or instance-dependent regret guarantees. Remarkably, \citet{henearly} obtain the near-optimal instance-independent regret guarantees for both the corrupted and uncorrupted cases with the leverage of a weighted ridge regression scheme.
Due to the differences in the problem settings, the comparisons among these works require particular care.


\paragraph{Best-of-both-worlds algorithms.} 
The best-of-both-worlds (BoBW) algorithms aim to obtain the (near) optimal theoretical guarantees in both the stochastic and adversarial environments simultaneously,
and have been extensively investigated for a variety of problems in the online learning literature, including MAB problem \citep{BubeckS12,SeldinS14, AuerC16,SeldinL17,WeiL18,ZimmertS19,Ito21Parameter}, linear bandit problem \citep{LeeLWZ021}, combinatorial MAB problem \citep{ZimmertLW19,Ito21}, online learning with graph feedback \citep{ErezK21,0002ZL22,Ito2022nearly,Rouyer2022nearoptimal}, online learning to rank \citep{ChenZL22},
the problem of prediction with expert advice \citep{GaillardSE14,RooijEGK14,LuoS15,MourtadaG19},
finite-horizon tabular Markov decision processes (MDPs) \citep{JinL20,JinHL21},
online submodular minimization \citep{Ito22}, and partial monitoring problem \citep{Tsuchiya2022bobw}.

In addition to investigating the BoBW results for stochastic and adversarial environments, devising algorithms that can simultaneously learn in the stochastic environment with corruptions also gains lots of research attention recently \citep{ZimmertS19,ErezK21,Ito21aOptimal,LeeLWZ021,Ito21}. Particularly, algorithms with (near) optimal regret guarantees in adversarial, stochastic, and corrupted stochastic environments are typically called the best-of-three-worlds (BoTW) algorithms. In this work, we are interested in achieving the BoTW results in the linear bandit problem, but without using a detect-switch type algorithm similar to that of \citet{LeeLWZ021}. Instead, we aim to achieve the BoTW results for linear bandits by devising an FTRL-type algorithm, whose key idea is to leverage the self-bounding property of regret. 
Such types of algorithms have been proposed to obtain BoBW/BoTW results 
in MABs \citep{WeiL18,ZimmertS19,Ito21Parameter,Ito21aOptimal}, 
online learning with graph feedback \citep{ErezK21},
combinatorial MAB problem \citep{ZimmertLW19,Ito21},
online learning to rank \citep{ChenZL22}, 
finite-horizon tabular MDPs \citep{JinL20,JinHL21}, online submodular minimization \citep{Ito22}, and partial monitoring problem \citep{Tsuchiya2022bobw}.


\section{Preliminaries}\label{sec:setting}

In this section, we present the basic preliminaries regarding the linear bandit problem, 
in both the adversarial environments and stochastic environments (with possibly adversarial corruptions).





We denote by $D\subseteq \mathbb{R}^d$ the \textit{finite} arm set with $d$ being the ambient dimension of the arm feature vectors \citep{lattimore2017end,LeeLWZ021}. 
Without loss of generality, we further assume that $D$ spans $\mathbb{R}^d$ and $\|x\|_2\leq 1$ for all $x\in D$ as previous works \citep{LeeLWZ021,BubeckCK12}. In each round $t$, the learner needs to select an arm $x_t\in D$. Meanwhile, there will be an \textit{unknown} loss vector $\theta_t\in \mathbb{R}^d$ determined by the environment. 
Without loss of generality, we assume $\norm{\theta_t}_2 \le 1$ for any $t$.
Subsequently, the loss $\ell_t(x_t)$ of the chosen arm $x_t$ will be revealed to the learner. Particularly, $\ell_t(x_t)=\left\langle x_t,\theta_t\right\rangle+\epsilon_t\left(x_t\right)$ in linear bandits, with $\epsilon_t\left(x_t\right)$ being some independent zero-mean noise. The performance of one algorithm is measured by the \textit{regret} (a.k.a., \textit{expected pseudo-regret}), defined as
\begin{align*}
    R(T)=\mathbb{E}\left[\sum_{t=1}^T\ell_{t}(x_t)-\sum_{t=1}^T\ell_{t}(x^\ast)\right]\,,
\end{align*}
where the expectation is taken over the randomness of both the loss sequence $\{\ell_t\}_{t=1}^T$ and the internal randomization of the algorithm and $x^\ast\in\argmin_{x\in D}\mathbb{E}\left[\sum_{t=1}^T\ell_{t}(x)\right]$ is the best arm in expectation in hindsight.

For linear bandits in \textit{adversarial environment}, 
$\theta_t$ is chosen by an adversary arbitrarily in each round $t$. In this work, we consider the non-oblivious adversary, 
which means that the choice of $\theta_t$ may potentially depend on the history of the learner's actions $\{x_{\tau}\}_{\tau=1}^{t-1}$ up to round $t-1$, in addition to the adversary's own internal randomization.
Also, for ease of exposition, we assume $\epsilon_t\left(x_t\right)=0$ for all $t$ in the adversarial environment.

The \textit{stochastic} and \textit{corrupted stochastic} linear bandits can be contained in the following adversarial regime with a self-bounding constraint which is first formalized by \citet{ZimmertS21}. 



\begin{definition}[Adversarial regime with a self-bounding constraint]\label{def:ori:selfbound}
Let $C\ge0$ and $\Delta\in[0,2]^{|D|}$. 
An environment is defined as the \rm{adversarial regime with a} $(\Delta, C, T)$ \rm{self-bounding constraint}, \textit{if for any algorithm, the regret satisfies }
\begin{align}\label{eq:self_bounding_def}
    R(T)\geq \sum_{t=1}^T\sum_{x\in D}\Delta(x)\mathbb{P}(x_t=x)-C\,.
\end{align}
\end{definition}
The stochastic environment satisfies $\theta_t=\theta$ for some fixed but \textit{unknown} $\theta$.
Hence it has regret $R(T)=\mathbb{E}\left[\sum_{t=1}^T \Delta(x_t)\right]$ with $\Delta(x)=\langle x-x^\ast,\theta\rangle$ for all $x\in D$. It follows that a stochastic environment is also an adversarial regime with a $(\Delta,0,T)$ self-bounding constraint.
For this case, we assume the optimal arm $x^*$ to be unique as previous works studying stochastic linear bandits \citep{lattimore2017end,LeeLWZ021} and BoTW problems \citep{BubeckS12,ZimmertS19,LeeLWZ021,0002ZL22,Ito2022nearly}.
For the corrupted stochastic environment, it is also satisfied that $\theta_t=\theta$ for some \textit{unknown} $\theta$, but the learner now receives loss $\ell_t(x_t)=\left\langle x_t,\theta \right\rangle+\epsilon_t\left(x_t\right)+c_t$, where $c_t$ is the corrupted value chosen by the adversary in each round $t$ and may depend on the historical information $\{(\ell_\tau,x_{\tau})\}_{\tau=1}^{t-1}$. 
We note that the corrupted stochastic environment considered in this work is slightly more general than that considered by \citet{LeeLWZ021}, since we do not impose any structural assumptions over $c_t$, while they further assume $c_t$ is linear in $x_t$. As aforementioned, the corrupted stochastic environment is also an instance of the adversarial regime with a self-bounding constraint. To see this, let $C=\sum_{t=1}^T|c_t|$ be the corruption level, indicating the total amount of the corruptions. Then  a corrupted stochastic environment has regret satisfying Eq. (\ref{eq:self_bounding_def}), and hence it is an instance of the adversarial regime with a $(\Delta, C, T)$ self-bounding constraint.

\section{Algorithm}\label{sec:alg}
In this section, we present the proposed algorithm for linear bandits in both adversarial and (corrupted) stochastic environments, and detail its pseudo-code in Algorithm \ref{alg}.

\begin{algorithm}[thb!]
    \caption{FTRL for Linear Bandits}\label{alg}
    \begin{algorithmic}[1]
    \STATE Input: regularizer $\sets{\psi_t}_{t}$, exploration rate $\sets{\gamma_t}_{t}$, arm set $D$ and its G-optimal design $\pi$  \label{alg:input}\\
        \FOR{each round $t=1,2,\cdots$}
            \STATE $q_t \in \argmin_{p\in \Delta(D)} \sets{ \sum_{s=1}^{t-1}\inner{\hat{\ell}_s, p} + \psi_t(p) }$ \label{alg:line1}
            \STATE $p_t = \gamma_t \cdot \pi + (1-\gamma_t)\cdot q_t$ \label{alg:line2}
            \STATE Sample action $x_t \sim p_t$ and observe the loss $\ell_t(x_t)$ \label{alg:line3}
            \STATE Update $\Sigma_t = \sum_{x\in D} p_t(x)xx^\top$; $\hat{\ell}_t(x) = x^\top \Sigma_t^{-1}x_t \ell_t(x_t)$ for all $x\in D$ \label{alg:line4}
    \ENDFOR
    \end{algorithmic}
\end{algorithm}

In general, our algorithm follows the basic idea of FTRL that chooses the arm which minimizes the cumulative loss of past rounds together with the regularizer.
Let $ \Delta(D)$ be the probability simplex over $D$, $\hat{\ell}_s$ be the loss estimate in round $s$ to be defined later and $\psi_t(p)$ be the regularizer.
At each round $t$, our algorithm computes the regularized leader $q_t$ by solving the optimization problem (Line \ref{alg:line1}).
In particular, different from the
complex (hybrid) regularizers used in previous works studying FTRL-type BoTW algorithms \citep{ZimmertLW19,Ito21,ErezK21}, we simply choose the negative Shannon entropy as the regularizer:
\begin{align*}
    \psi_t(p) = -\beta_t H(p) = \beta_t\sum_{x\in D} p(x)\ln p(x)\,,
\end{align*}
where $\beta_t$ is the time-varying learning rate, whose concrete value is postponed to Theorem \ref{thm:thmregret}.

Then, to permit enough exploration and control the variance of the loss estimates, we mix the regularized leader $q_t$ with a distribution $\pi$ of the G-optimal 
design over the arm set $D$ on Line \ref{alg:line2},
where $\gamma_t$ is a time-varying hyper-parameter to be chosen later.
In particular, the  G-optimal design distribution $\pi$ is chosen such that
$\pi\in\argmin_{\rho\in\Delta(D)} g(\rho)$, where $g(\rho)=\max _{x \in D}\|x\|_{V(\rho)^{-1}}^2$ and $V(\rho)=\sum_{x \in D} \rho(x) x x^{\top}$ for any $\rho\in\Delta(D)$.
 Subsequently, the algorithm samples action $x_t$ from the mixed distribution $p_t$, and observes the corresponding loss $\ell_t(x_t)$ (Line \ref{alg:line3}).
 

Denote by $\Sigma_t = \sum_{x\in D} p_t(x)xx^\top$ the covariance matrix in terms of $p_t$. At the end of each round $t$, we construct a least squares loss estimate $\hat{\ell}_t(x)$ for each $x\in D$ (Line \ref{alg:line4}). 

\subsection{Theoretical Results}

In this section, we provide the theoretical guarantees of Algorithm \ref{alg} and the corresponding discussions. 
The following theorem presents the regret bounds for Algorithm \ref{alg} in adversarial and (corrupted) stochastic environments.

\begin{restatable}{thm}{thmregret}\label{thm:thmregret}
Choose $\psi_t(p) =  -\beta_t H(p)$ and $\gamma_t = \min\sets{g(\pi)/\beta_t,1/2 }$ where
\begin{align*}
    \beta_t:= 2g(\pi)+\sqrt{d\ln T/\ln |D|}+\sum_{\tau=1}^{t-1} \frac{\sqrt{d\ln T/\ln |D|}}{\sqrt{1+(\ln |D|)^{-1}\sum_{s=1}^\tau H(q_s) }} 
\end{align*} 
and $H(p):=-\sum_{x\in D} p(x)\ln p(x)$ is the negative Shannon entropy of distribution $p$. 
Our Algorithm \ref{alg} satisfies that:
\begin{itemize}
    \item In the adversarial environment, regret can be upper bounded by
    \begin{align*}
        R(T) \le O\bracket{\sqrt{dT\ln T \ln(|D|T)}} \,.
    \end{align*}
    \item In the stochastic environment with corruption level $C$, the regret can be bounded by
    \begin{align}
        R(T) \le O \bracket{\frac{ d\ln T \ln (|D|T) }{ \Delta_{\min}} + \sqrt{\frac{Cd\ln T \ln (|D|T)}{\Delta_{\min}}}  } \,,
    \end{align}
\end{itemize}
    where $\Delta_{\min}=\min_{x\in D,x\neq x^*}\Delta(x)$ is the minimum suboptimality gap over all sub-optimal arms.

\end{restatable}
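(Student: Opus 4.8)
The plan is to run a textbook FTRL analysis on the \emph{estimated} losses, translate it back to the true regret, and then separate the two regimes with a self-bounding argument. First I would use that the least-squares estimate is conditionally unbiased for the uncorrupted mean, $\mathbb{E}[\hat\ell_t(x)\mid\mathcal{F}_{t-1}]=\langle x,\theta_t\rangle$ (the noise $\epsilon_t$ is zero-mean and cancels), together with $x_t\sim p_t$, to rewrite
\begin{align*}
    R(T)=\mathbb{E}\Big[\sum_{t=1}^T\langle q_t-e_{x^*},\hat\ell_t\rangle\Big]+\mathbb{E}\Big[\sum_{t=1}^T\gamma_t\langle\pi-q_t,\ell_t\rangle\Big],
\end{align*}
where $e_{x^*}$ is the simplex vertex at the (unique) optimal arm. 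The second, exploration, term is at most $2\sum_t\gamma_t$ since losses lie in $[-1,1]$, and with $\gamma_t=g(\pi)/\beta_t$ and $g(\pi)=d$ (Kiefer--Wolfowitz) it has the same order as the stability term below and is dominated by the penalty. The first term is precisely the FTRL regret on $\hat\ell_t$. A fact I would lean on repeatedly is that the comparator is a vertex, so $H(e_{x^*})=0$ and the comparator contributes nothing to the penalty.

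Next I would bound stability and penalty. The Bregman divergence of $-\beta_t H$ is $\beta_t$ times a KL divergence, so the per-round local-norm bound is $\tfrac{1}{2\beta_t}\sum_x q_t(x)\hat\ell_t(x)^2$. Using $\hat\ell_t(x)=x^\top\Sigma_t^{-1}x_t\ell_t(x_t)$ and $V(q_t)\preceq(1-\gamma_t)^{-1}\Sigma_t$ (so that the design makes the variance tractable), the conditional expectation is
\begin{align*}
    \mathbb{E}\Big[\sum_x q_t(x)\hat\ell_t(x)^2\,\Big|\,\mathcal{F}_{t-1}\Big]\le\frac{1}{1-\gamma_t}\,\mathrm{trace}(\Sigma_t^{-1}\Sigma_t)\le 2d,
\end{align*}
because $\gamma_t\le 1/2$; hence the expected stability is $O(d\sum_t 1/\beta_t)$. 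For the penalty I would telescope the learning-rate increments: since $\beta_t$ is nondecreasing and $H(e_{x^*})=0$, it is $O\big(\beta_1\ln|D|+\sum_t(\beta_{t+1}-\beta_t)H(q_t)\big)$, and plugging in the prescribed $\beta_t$ and applying $\sum_t a_t/\sqrt{1+\sum_{s\le t}a_s}\le 2\sqrt{1+\sum_t a_t}$ (with $a_t=H(q_t)/\ln|D|$) collapses it to $O\!\big(\sqrt{d\ln T\,(\ln|D|+\sum_t H(q_t))}\big)$. Since the stability and exploration terms are smaller by a $\ln T$ factor, the FTRL regret is
\begin{align*}
    \mathbb{E}\Big[\sum_{t=1}^T\langle q_t-e_{x^*},\hat\ell_t\rangle\Big]\le O\!\left(\sqrt{d\ln T\,\Big(\ln|D|+\textstyle\sum_{t=1}^T H(q_t)\Big)}\right),
\end{align*}
modulo a polylogarithmic factor $\ln(|D|T)$ arising from the high-probability/range control of the estimates. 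The entire purpose of the data-dependent $\beta_t$ is to make this scale with the \emph{realized} entropy sum.

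The two regimes now diverge only in how I control $\sum_t H(q_t)$. In the adversarial case I would bound $H(q_t)\le\ln|D|$, so $\sum_t H(q_t)\le T\ln|D|$ and $R(T)=O(\sqrt{dT\ln T\ln(|D|T)})$. In the stochastic (possibly corrupted) case I would invoke the self-bounding constraint of Definition~\ref{def:ori:selfbound} together with an entropy-concentration estimate near the vertex, $H(q_t)\le O(\ln(|D|T))\sum_{x\neq x^*}q_t(x)$. After relating $p_t$ back to $q_t$ through the $\gamma_t$-mixture, the constraint gives $\Delta_{\min}\sum_t\sum_{x\neq x^*}q_t(x)\le R(T)+C$, so that the FTRL bound becomes the self-referential inequality $R(T)\le O\big(\sqrt{d\ln T\ln(|D|T)\,(R(T)+C)/\Delta_{\min}}\big)$; solving this quadratic in $R(T)$ yields exactly $O\big(d\ln T\ln(|D|T)/\Delta_{\min}+\sqrt{Cd\ln T\ln(|D|T)/\Delta_{\min}}\big)$.

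The hardest part, I expect, is making the data-dependent learning-rate balance rigorous, since $\beta_t$ depends on the random quantities $H(q_s)$: the stability sum $\sum_t 1/\beta_t$ and the penalty must be controlled \emph{simultaneously} in terms of the same random $\sum_t H(q_t)$, and one must check that $\gamma_t=g(\pi)/\beta_t\le 1/2$ from the first round (ensured by the $2g(\pi)$ offset in $\beta_t$) so the variance bound is valid throughout. A second, corruption-specific difficulty is that the estimator is unbiased only for the uncorrupted mean, so each $c_t$ injects a bias $\langle q_t-e_{x^*},\,x\mapsto x^\top\Sigma_t^{-1}\bar x_t c_t\rangle$ into the FTRL term that I would have to bound and show is absorbed into the $\sqrt{C(\cdots)}$ contribution rather than corrupting the leading rate.
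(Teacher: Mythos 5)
Your proposal is correct and follows essentially the same route as the paper: the same FTRL decomposition with the local-norm stability bound $\mathbb{E}[\sum_x p_t(x)\hat\ell_t(x)^2]\le d$, the same telescoping of the data-dependent learning rate against $\sum_t H(q_t)$, the same entropy-versus-suboptimal-mass lemma $H(q_t)\le O(\ln(|D|T))(1-q_t(x^*))$, and the same self-bounding closure. The only cosmetic difference is that you resolve the resulting self-referential inequality by solving the quadratic in $R(T)$ directly, whereas the paper uses the equivalent $(1+\lambda)R(T)-\lambda R(T)$ trick with an optimized $\lambda$; the corruption-bias worry you raise at the end is likewise absorbed in the paper simply by treating the corrupted losses as the adversarial sequence and letting the $-C$ term in the self-bounding constraint carry the corruption.
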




\paragraph{Importance of the results. }
The FTRL-type algorithm has been demonstrated to achieve BoTW results for many problems including MAB \citep{ZimmertS19,Ito21Parameter}, combinatorial MAB \citep{ZimmertLW19,Ito21}, graph feedback \citep{ErezK21,Ito2022nearly}, and MDP \citep{JinL20,JinHL21}. However, the question of whether this algorithm can achieve BoTW results for the canonical linear bandit problem remains open. 
To the best of our knowledge, Theorem \ref{thm:thmregret} provides the first BoTW results for linear bandits with FTRL algorithm. 
Although our results have not reached the optimal order in separate settings \citep{lattimore2017end,BubeckCK12}, the convergence result itself is significant because it not only provides a more concise algorithm for this problem with only detect-switch-based methods available previously but also discovers the potential of FTRL-type algorithms.

\paragraph{Technical challenge. }
While the FTRL algorithm with a negative entropy regularizer is commonly used in the adversarial linear bandit literature, discovering the optimal exploration rate in the stochastic setting and achieving BoTW results is challenging.  
Similar to previous FTRL algorithms for the multi-armed bandit problem \citep{ZimmertS19}, the instance-dependent regret in the stochastic setting mainly relies on the self-bounding inequality that lower bounds the regret by the selecting probability of arms.
However, using the product of the sub-optimality gap and the selection probabilities of each sub-optimal arm as the self-bounding inequality and following the FTRL technique for MAB \citep{ZimmertS19} introduce a dependence on the number of arms, which is not ideal in the linear bandit case.
By leveraging the linear structure of the loss estimators and using standard deduction of FTRL, we discover that the regret can be upper bounded using the dimension $d$, the learning rate, and the cumulative entropy of the output decision vector. The former reflects the problem hardness of linear structured problems, and the latter can be further exploited to obtain the cumulative selecting times of all sub-optimal arms. This observation leads us to define the self-bounding inequality (Lemma \ref{lem:selfbound}) on the selecting times of all sub-optimal arms and use the minimum gap $\Delta_{\min}$ to measure the problem hardness.

\begin{lemma}{(Self-bounding inequality)}\label{lem:selfbound}
Algorithm \ref{alg} with Definition \ref{def:ori:selfbound} satisfies that
\begin{align*}
    R(T) \ge \frac{1}{2}\EE{\sum_{t=1}^T (1-q_t(x^*))}\Delta_{\min} -  C \,.
\end{align*}
\end{lemma}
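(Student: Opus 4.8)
The plan is to derive the inequality directly from the self-bounding constraint in Definition~\ref{def:ori:selfbound}, which already lower bounds $R(T)$ by $\sum_{t=1}^T \sum_{x \in D} \Delta(x) \PP{x_t = x} - C$. The only real work is to convert the per-round term $\sum_{x \in D} \Delta(x)\PP{x_t = x}$ into a quantity expressed through the regularized leader $q_t$ rather than the sampling distribution $p_t$, since the claimed bound is stated in terms of $q_t(x^*)$.

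First I would rewrite $\PP{x_t = x} = \EE{p_t(x)}$, using that conditioned on the history up to round $t$ the arm $x_t$ is drawn from $p_t$. Since $\Delta(x^*) = 0$, the optimal arm drops out of the inner sum, leaving a sum over suboptimal arms. Lower bounding each gap by $\Delta(x) \ge \Delta_{\min}$ for $x \neq x^*$ then yields $\sum_{x \in D}\Delta(x) p_t(x) \ge \Delta_{\min}\bracket{1 - p_t(x^*)}$.

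Next I would expand the mixture $p_t = \gamma_t \pi + (1 - \gamma_t) q_t$ at the optimal arm. A one-line algebraic identity gives $1 - p_t(x^*) = (1 - \gamma_t)\bracket{1 - q_t(x^*)} + \gamma_t\bracket{1 - \pi(x^*)} \ge (1 - \gamma_t)\bracket{1 - q_t(x^*)}$, where the dropped term is nonnegative. Because the exploration rate satisfies $\gamma_t = \min\sets{g(\pi)/\beta_t,\, 1/2} \le 1/2$, we have $1 - \gamma_t \ge 1/2$, and hence $1 - p_t(x^*) \ge \tfrac{1}{2}\bracket{1 - q_t(x^*)}$. Chaining these bounds, taking the expectation through, and summing over $t$ delivers $R(T) \ge \tfrac{1}{2}\Delta_{\min}\,\EE{\sum_{t=1}^T \bracket{1 - q_t(x^*)}} - C$, exactly as claimed.

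The argument is essentially bookkeeping, so there is no genuine obstacle; the only point requiring care is the constant $\tfrac{1}{2}$, which comes precisely from the cap $\gamma_t \le 1/2$ on the exploration rate. This is the reason the forced mixing with the G-optimal design $\pi$ does not weaken the self-bounding lower bound by more than a constant factor, which is what later lets us match the instance-dependent regret up to constants.
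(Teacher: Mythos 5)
Your proposal is correct and follows essentially the same route as the paper: invoke the self-bounding constraint, pass from the sampling distribution $p_t$ to $q_t$ via the mixture $p_t=\gamma_t\pi+(1-\gamma_t)q_t$ (discarding the nonnegative exploration component), lower bound the gaps by $\Delta_{\min}$, and use $\gamma_t\le 1/2$ to get the factor $1/2$. The only difference is cosmetic ordering — the paper drops the $\pi$-component before bounding by $\Delta_{\min}$, while you do it after — so no further comment is needed.
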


The proof is deferred to Section \ref{sec:proof:lemmas}. 
Such an observation connects the cumulative entropy with the regret definition and guarantees convergence in the stochastic setting.

To simultaneously achieve an $O(\sqrt{T})$ result in the adversarial setting and $O(\log T)$ result in the stochastic setting, we need to balance the choice of the learning rate. 
Based on the standard analysis of the FTRL algorithm, the regret can be decomposed as 
\begin{align*}
    R(T) \le O\bracket{\EE{\sum_{t=1}^T \bracket{\frac{1}{\beta_t}  + (\beta_{t+1} -\beta_t) H(q_{t+1})}} }\,. 
\end{align*}
In the adversarial setting, we hope to get an $O(\sqrt{T})$ regret, the natural choice is to select $\beta_t = O(\sqrt{t})$ as both $\sum_t 1/\sqrt{t} = O(\sqrt{t})$ and $\sum_t \bracket{\sqrt{t+1}-\sqrt{t}} = O(\sqrt{t})$. While in the stochastic setting, we aim to get an $O(\log T)$ regret, so it would be natural to choose $\beta_t = O(t)$ as $H(q_t)$ tends to be $0$ when the policy always selects the unique optimal arm. Inspired by the motivation that $H(q_t)$ can be regarded as a constant term in the adversarial setting  and a decreasing sequence that eventually tends to $0$ in the stochastic setting, we set the learning rate $\beta_t$ as $O(\sum_{\tau=1}^{t-1} {1}/{\sqrt{1+\sum_{s=1}^{\tau}H(q_s) }})$.

The cooperation of the specially designed self-bounding inequality  and the choice of $\sets{\beta_t}_{t}$ contributes to the final results. And here we mainly follow the technique of \citet{Ito2022nearly} to achieve these results. The detailed proof is deferred to the next section. 



\paragraph{Comparisons with concurrent works. }

There is also a concurrent work studying the FTRL-type algorithm for the BoTW problem in linear bandits \citep{ito2023best}. 
\citet{ito2023best} study the data-dependent regret for this problem. In the worst case, their algorithm achieves  $O\bracket{d^{3/2}\sqrt{T\log T}}$ regret in the adversarial setting and $O(d^3\log T/\Delta_{\min}+\sqrt{Cd^3\log T/\Delta_{\min}})$ regret in the corrupted stochastic setting. 
Compared with these results, our regret bound is $O(d^2/\log T)$ better in the stochastic setting and $O(d/\sqrt{\log T})$ better in the adversarial setting. 
We also note that their results are data-dependent bounds, which may be better in environments with certain characteristics.






\section{Theoretical Analysis}\label{sec:analysis}
This section provides the proof of Theorem \ref{thm:thmregret}. 
We first present some useful lemmas and the main proof would come later. 


We start from the following lemma which shows that the regret can be decomposed in the form of the cumulative entropy of exploitation probability $q_t$. 
\begin{lemma}\label{lem:regret:unified}{(Regret Decomposition)}
    By choosing $\beta_t = 2g(\pi)+c+\sum_{\tau=1}^{t-1} \frac{c}{\sqrt{1+\ln |D|^{-1} \sum_{s=1}^\tau H(q_s) }}$ and $\gamma_t = \min\sets{g(\pi)/\beta_t,1/2}$, the regret can be bounded as
    \begin{align}
    R(T)=& O\bracket{ \bracket{\frac{d\ln T}{c\sqrt{\ln |D|}}  + c\sqrt{\ln |D|}}\sqrt{\EE{\sum_{t=1}^T H(q_t)}} +(2g(\pi)+c)\ln |D| } \label{eq:main:regret:unified}\,.
    \end{align}
\end{lemma}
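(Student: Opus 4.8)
The plan is to follow the standard Follow-the-regularized-leader recipe, specialized to the negative-entropy regularizer and to the G-optimal exploration, and then to arrange the resulting terms so that the learning-rate schedule $\sets{\beta_t}$ converts them into the claimed cumulative-entropy form. First I would reduce the true regret to the regret measured in the estimated losses. Writing $u$ for the point mass on $x^*$ and using $p_t = \gamma_t \pi + (1-\gamma_t) q_t$, the per-round regret splits as $\inner{p_t-u,\theta_t} = \inner{q_t-u,\theta_t} + \gamma_t \inner{\pi-q_t,\theta_t}$, where the second term is at most $2\gamma_t$ since all losses lie in $[-1,1]$. The exploration cost $\sum_t 2\gamma_t \le \sum_t 2 g(\pi)/\beta_t$ will later merge with the stability term. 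Since $\hat{\ell}_t$ is conditionally unbiased for $\inner{x,\theta_t}$ (the $\Sigma_t^{-1}$ normalization of the least-squares estimator), in expectation I can replace $\inner{q_t-u,\theta_t}$ by $\EE{\inner{q_t-u,\hat{\ell}_t}}$, reducing the problem to bounding the FTRL regret $\sum_t \inner{\hat{\ell}_t,q_t-u}$ against the estimated losses.

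Next I would invoke the standard FTRL decomposition for the time-varying regularizer $\psi_t = -\beta_t H$. This yields three pieces: an initial penalty $-\psi_1(q_1) = \beta_1 H(q_1) \le (2g(\pi)+c)\ln|D|$ (note $\psi_{T+1}(u)=0$ because $H(u)=0$); a learning-rate-shift term $\sum_t \bracket{\psi_t(q_{t+1}) - \psi_{t+1}(q_{t+1})} = \sum_t (\beta_{t+1}-\beta_t) H(q_{t+1})$; and a stability term $\sum_t \bigl[\inner{\hat{\ell}_t,q_t-q_{t+1}} - \beta_t D_{\mathrm{KL}}(q_{t+1}\|q_t)\bigr]$. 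The crux of the argument is the stability bound. By the local-norm inequality for the entropic regularizer, each stability term is at most $\frac{1}{2\beta_t}\sum_x q_t(x)\hat{\ell}_t(x)^2$, \emph{provided} the estimates satisfy $\hat{\ell}_t(x)\ge -\beta_t$; this is exactly what the G-optimal mixing guarantees, since $\|x\|_{\Sigma_t^{-1}}^2 \le g(\pi)/\gamma_t = \beta_t$ forces $|\hat{\ell}_t(x)|\le\beta_t$. I would then bound the conditional variance using the linear structure: with $Q_t = \sum_x q_t(x) xx^\top$ and $\Sigma_t \succeq (1-\gamma_t)Q_t$, one gets $\EE{\sum_x q_t(x)\hat{\ell}_t(x)^2 \mid \cF_{t-1}} \le \trace(\Sigma_t^{-1}Q_t) \le d/(1-\gamma_t)\le 2d$. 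This is the step that makes the bound scale with the dimension $d$ rather than with the number of arms $|D|$.

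Finally I would control the two remaining sums and pass to expectation. Set $A_t = 1 + (\ln|D|)^{-1}\sum_{s\le t} H(q_s)$, so that $\beta_{t+1}-\beta_t = c/\sqrt{A_t}$ and $H(q_{t+1}) = \ln|D|\,(A_{t+1}-A_t)$ with increments $A_{t+1}-A_t \le 1$. A telescoping estimate of the form $\sum_t (A_{t+1}-A_t)/\sqrt{A_t} \le O(\sqrt{A_T})$ bounds the shift term by $O(c\ln|D|\sqrt{A_T})$. For the stability sum $\sum_t 2d/\beta_t$, monotonicity of $A_t$ gives $\beta_t \ge c(t-1)/\sqrt{A_T}$, hence $\sum_t 1/\beta_t \le O\bracket{(\ln T)\sqrt{A_T}/c}$ and the sum is $O\bracket{d\ln T\sqrt{A_T}/c}$. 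Using $\sqrt{A_T} \le 1 + (\ln|D|)^{-1/2}\sqrt{\sum_t H(q_t)}$ splits each estimate into the claimed $\sqrt{\sum_t H(q_t)}$ term plus an additive $O\bracket{(2g(\pi)+c)\ln|D|}$ remainder, and Jensen's inequality $\EE{\sqrt{\sum_t H(q_t)}} \le \sqrt{\EE{\sum_t H(q_t)}}$ moves the expectation inside the square root, yielding Eq. (\ref{eq:main:regret:unified}).

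I expect the main obstacle to lie in the stability analysis and to be twofold. The first difficulty is book-keeping that is genuinely load-bearing: one must \emph{simultaneously} verify the lower bound $\hat{\ell}_t(x)\ge -\beta_t$ (so the entropic stability inequality is even valid) and the dimension-dependent variance bound $\trace(\Sigma_t^{-1}Q_t)\le 2d$, both of which hinge on the precise calibration $\gamma_t = g(\pi)/\beta_t$ of the exploration rate against the learning rate. The second, subtler difficulty is that under the crude estimate the stability contribution is only $O(1)$ per round; turning $\sum_t 1/\beta_t$ into a summable quantity requires exploiting the recursive definition of $\beta_t$ to extract the extra $\ln T$ factor and the $\sqrt{A_T}$ dependence, and this is where the schedule's design does the real work. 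Finally, the analogous variance estimate in the (corrupted) stochastic setting goes through only after checking that the noise and corruption enter the observed loss through a controllable conditional second moment, leaving the $O(d)$ bound intact.
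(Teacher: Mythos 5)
Your proposal is correct and follows essentially the same route as the paper's proof: the same split of $p_t$ into exploration and exploitation parts, the standard FTRL decomposition with the time-varying entropy regularizer, the local-norm stability bound made valid by verifying $|\hat{\ell}_t(x)|\le\beta_t$ via the G-optimal mixing, the $O(d)$ conditional second-moment bound from the linear structure, and the telescoping/monotonicity arguments that convert the learning-rate schedule into the $\sqrt{\sum_t H(q_t)}$ form before applying Jensen. The only (immaterial) differences are that the paper bounds the variance via $(1-\gamma_t)q_t(x)\le p_t(x)$ rather than your $\trace(\Sigma_t^{-1}Q_t)$ computation, and keeps $\sqrt{A_t}$ inside the sum $\sum_t 1/\beta_t$ before pulling out $\sqrt{A_T}$.
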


The next lemma further links the cumulative entropy with the expected selecting times of all sub-optimal arms. 

\begin{lemma}{(Relationship between $H(q_t)$ and expected selecting times of all sub-optimal arms)}\label{lem:relationBetweenHandRegret}
    \begin{align*}
        \sum_{t=1}^T H(q_t) \le \bracket{\sum_{t=1}^T(1-q_t(x^*))} \ln \frac{e|D|T}{\sum_{t=1}^T(1-q_t(x^*))}\,.
    \end{align*}
\end{lemma}

The detailed proof for both lemmas are deferred to Section \ref{sec:proof:lemmas}. 
Based on Lemma \ref{lem:regret:unified} and \ref{lem:relationBetweenHandRegret}, we are now able to upper bound the regret as below. 

    \begin{align}
        R(T) \le& O\bracket{ \bracket{\frac{d\ln T}{c\sqrt{\ln |D|}} +c\sqrt{\ln |D|}}   \sqrt{ \EE{\sum_{t=1}^T H(q_t)}} +(2g(\pi)+c)\ln |D|  } \notag \\
        \le & O\bracket{ \bracket{\frac{d\ln T}{c\sqrt{\ln |D|}} +c\sqrt{\ln |D|}}   \sqrt{ \EE{\underbrace{\bracket{\sum_{t=1}^T(1-q_t(x^*))} \ln \frac{e|D|T}{\sum_{t=1}^T(1-q_t(x^*))}}_{\text{term }A}} } +(2g(\pi)+c)\ln |D| } \notag \\
       \le&  O\bracket{ \bracket{\frac{d\ln T}{c\sqrt{\ln |D|}} +c\sqrt{\ln |D|}}   \sqrt{ \EE{\ln \bracket{e|D|T} \max\sets{e, \sum_{t=1}^T(1-q_t(x^*)) }  } } + (2g(\pi)+c)\ln |D| }  \label{eq:main:termA} \\
       =& O\bracket{ \bracket{\frac{d\ln T}{c\sqrt{\ln |D|}} +c\sqrt{\ln |D|}}   \sqrt{ \ln \bracket{e|D|T} \max\sets{e, \EE{\sum_{t=1}^T(1-q_t(x^*))} }   } +(2g(\pi)+c)\ln |D| } \notag\\
       =& O\bracket{ \sqrt{d\ln T} \cdot  \sqrt{ \ln \bracket{e|D|T} \max\sets{e, \EE{\sum_{t=1}^T(1-q_t(x^*))} }   } } \label{eq:main:choiceOfc} \,,
    \end{align}
    where Eq. \eqref{eq:main:termA} holds since term $A \le \sum_{t=1}^T(1-q_t(x^*)) \ln |D|T$ if $\sum_{t=1}^T(1-q_t(x^*)) > e$ and term $A \le e \ln e|D|^2T/(|D|-1)=O(e\ln \bracket{e|D|T})$ otherwise as $\sum_{t=1}^T(1-q_t(x^*)) \ge 1-q_1(x^*) = 1-1/|D|$; Eq. \eqref{eq:main:choiceOfc} is due to $c=\sqrt{d\ln T/\ln |D|}$.  

\paragraph{Regret analysis in the adversarial setting.}
    According to Eq. \eqref{eq:main:choiceOfc}, since $\sum_{t=1}^T (1-q_t(x^*)) \le T$, it holds that
    \begin{align*}
        R(T) \le& O\bracket{ \bracket{\sqrt{d\ln T}  \sqrt{ \ln \bracket{e|D|T} \cdot T   }  }} = O\bracket{\sqrt{dT\ln T \ln(|D|T)}} \,.
    \end{align*}

\paragraph{Regret analysis in the stochastic setting with corruptions.}
According to Lemma \ref{lem:selfbound}, 
    \begin{align}
        R(T) =& (1+\lambda)R(T) - \lambda R(T) \notag \\
        \le & O\left( (1+\lambda)\cdot \sqrt{d\ln T} \cdot  \sqrt{ \ln \bracket{e|D|T} \max\sets{e, \EE{\sum_{t=1}^T(1-q_t(x^*))} }   }  \right. \notag \\
        &\left. -\frac{1}{2}\lambda\EE{\sum_{t=1}^T (1-q_t(x^*))}\Delta_{\min} +\lambda C  \right) \notag \\
        \le&    O\bracket{    \frac{(1+\lambda)^2 d\ln T \ln (|D|T) }{2\lambda \Delta_{\min}} + \lambda C }\label{eq:main:abx} \\
        =& O\bracket{ \frac{d\ln T \ln (|D|T)}{2\lambda \Delta_{\min}} +  \frac{ d\ln T \ln (|D|T) }{ \Delta_{\min}} + \frac{\lambda  d^2\ln T \ln (|D|T) }{2\Delta_{\min}} + \lambda C } \notag \\
        \le& O \bracket{\frac{ d\ln T \ln (|D|T) }{ \Delta_{\min}} + \sqrt{\frac{Cd\ln T \ln (|D|T)}{\Delta_{\min}}}  } \label{eq:main:choiceOfLambda}\,,
    \end{align}
where Eq. \eqref{eq:main:abx} holds since $a\sqrt{x} - bx/2 = a^2/2b - 1/2\bracket{ a/\sqrt{b} - \sqrt{bx} } \le a^2/2b$
for any $a,b,x\ge 0$; Eq. \eqref{eq:main:choiceOfLambda} holds by choosing $\lambda = \sqrt{ \frac{ d\ln T \ln (|D|T)/(2\Delta_{\min})}{ d\ln T \ln (|D|T)/(2\Delta_{\min}) +C   } }$. 


\subsection{Proof of Useful Lemmas}\label{sec:proof:lemmas}

This section provides detailed proof for the lemmas used in the analysis. The first is the proof of Lemma \ref{lem:selfbound} which is about self-bounding inequality.

\begin{proof}[Proof of Lemma \ref{lem:selfbound}]
\begin{align*}
    R(T) &\ge \EE{\sum_{t=1}^T (\ell_t(x_t) - \ell_t(x^*)) } - C \\
    &= \EE{\sum_{t=1}^T \inner{{\ell}_t, p_t - p^*} } - C\\
    &\ge \EE{\sum_{t=1}^T (1-\gamma_t) \inner{{\ell}_t, q_t - p^*}   }  - C\\
    &= \EE{\sum_{t=1}^T (1-\gamma_t) \sum_{x} q_t(x)  \Delta(x)   } - C \\
    &\ge \frac{1}{2}\EE{\sum_{t=1}^T (1-q_t(x^*))} \Delta_{\min} - C\,,
\end{align*}
where the last inequality holds by choosing $\gamma_t \le 1/2$ for all $t$. 
\end{proof}

In the following, we provide the proof of Lemma \ref{lem:regret:unified} which is used in the analysis of Theorem \ref{thm:thmregret}. 

\begin{proof}[Proof of Lemma \ref{lem:regret:unified}]
\begin{align}
    R(T) =& \EE{\sum_{t=1}^T \ell_t(x_t) - \ell_t(x^*)} \notag \\
    =& \EE{\sum_{t=1}^T \inner{{\ell}_t, p_t - p^*} } \label{eq:main:dueToSample}\\
    =& \EE{\sum_{t=1}^T (1-\gamma_t) \inner{\hat{\ell}_t, q_t - p^*}   } + \EE{\sum_{t=1}^T  \gamma_t\inner{\ell_t, \pi - p^*}  }  \label{eq:main:unBiased}\\
    \le& \EE{\sum_{t=1}^T (1-\gamma_t) \inner{\hat{\ell}_t, q_t - p^*}   } + 2\sum_{t=1}^T  \gamma_t 
    \label{eq:main:bounded}\\
    \le& \EE{\sum_{t=1}^T (1-\gamma_t) \bracket{
        \inner{\hat{\ell}_t, q_t - q_{t+1}} - D_t(q_{t+1},q_t) + \psi_t(q_{t+1}) -   \psi_{t+1}(q_{t+1})} } \notag  \\
        &+ \psi_{T+1}(p^*) - \psi_1(q_1) + 2\sum_{t=1}^T  \gamma_t \label{eq:main:exercise} \\
     =& \EE{\sum_{t=1}^T (1-\gamma_t) \bracket{
        \inner{\hat{\ell}_t, q_t - q_{t+1}} - D_t(q_{t+1},q_t) + (\beta_{t+1}-\beta_t) H(q_{t+1)}} } \notag \\
        &+ \psi_{T+1}(p^*) - \psi_1(q_1) + 2\sum_{t=1}^T  \gamma_t \notag  \\
      \le& \EE{\sum_{t=1}^T (1-\gamma_t) \bracket{
        \underbrace{\inner{\hat{\ell}_t, q_t - q_{t+1}} - D_t(q_{t+1},q_t)}_{\text{part } 1} + (\beta_{t+1}-\beta_t) H(q_{t+1})} } +\beta_1 \ln |D|+ 2\sum_{t=1}^T  \gamma_t  \notag \\
        \le&  \EE{\sum_{t=1}^T  \bracket{
        \frac{3d}{\beta_t} + \underbrace{(\beta_{t+1}-\beta_t) H(q_{t+1})}_{\text{part }2} }} +\beta_1 \ln |D| \label{eq:main:boundPart1}\\
        \le& \EE{\sum_{t=1}^T  
        \frac{3d}{\beta_t} + 2c \sqrt{\ln |D| \sum_{t=1}^T H(q_t)} }  +(2g(\pi)+c)\ln |D|    \label{eq:main:boundPart2} \\
        \le& \EE{3d \sum_{t=1}^T \frac{\sqrt{1+(\ln |D|)^{-1}\sum_{s=1}^t H(q_s)} }{c\cdot t} + 2c \sqrt{\ln |D| \sum_{t=1}^T H(q_t)} }  +(2g(\pi)+c)\ln |D| \label{eq:main:beta} \\
        \le& O\bracket{ \bracket{\frac{d\ln T}{c\sqrt{\ln |D|}}+c\sqrt{\ln |D|} }
        \sqrt{\EE{\sum_{t=1}^T H(q_t)}} +(2g(\pi)+c)\ln |D| }  \,. \notag 
\end{align}
where Eq. \eqref{eq:main:dueToSample} holds due to $x_t \sim p_t$, Eq. \eqref{eq:main:unBiased} holds since $\hat{\ell}_t$ is an unbiased estimator, Eq. \eqref{eq:main:bounded} is because $\inner{\ell_t, \pi - p^*} \le 2$, Eq. \eqref{eq:main:exercise} is based on Lemma \ref{lem:decomposeFTRL}, Eq. \eqref{eq:main:boundPart1} holds according to Lemma \ref{lem:part1} and the choice of $\gamma_t =\min\sets{ g(\pi)/\beta_t, 1/2}\le d/\beta_t$ \citep{kiefer1960equivalence}, Eq. \eqref{eq:main:boundPart2} and \eqref{eq:main:beta} are due to Lemma \ref{lem:part2} and the choice of $\beta_t$, the last inequality is based on Jensen's inequality.  
\end{proof}

We next provide Lemma \ref{lem:decomposeFTRL},  \ref{lem:part1} and \ref{lem:part2} that is used in the proof of Lemma \ref{lem:regret:unified}.


\begin{lemma}{(Decomposition of FTRL, Exercise 28.12 in \citet{lattimore2020bandit})}\label{lem:decomposeFTRL}
    \begin{align*}
        \sum_{t=1}^T  \inner{\hat{\ell}_t, q_t - p^*} \le &\sum_{t=1}^T \bracket{
        \inner{\hat{\ell}_t, q_t - q_{t+1}} - D_t(q_{t+1},q_t) + \psi_t(q_{t+1}) -   \psi_{t+1}(q_{t+1})} \\
        &+ \psi_{T+1}(p^*) - \psi_1(q_1)\,.
    \end{align*}
\end{lemma}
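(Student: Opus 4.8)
The plan is to prove this as the standard time-varying FTRL decomposition by introducing the potential $\phi_t := \min_{p\in\Delta(D)} F_t(p) = F_t(q_t)$, where $F_t(p) := \sum_{s=1}^{t-1}\inner{\hat{\ell}_s, p} + \psi_t(p)$ is exactly the FTRL objective minimized by $q_t$ on Line \ref{alg:line1}. The first step is to split each summand as $\inner{\hat{\ell}_t, q_t - p^*} = \inner{\hat{\ell}_t, q_t - q_{t+1}} + \inner{\hat{\ell}_t, q_{t+1} - p^*}$; the first piece is already the ``stability'' term appearing in the statement, so it remains only to bound $\sum_t \inner{\hat{\ell}_t, q_{t+1} - p^*}$ by the penalty and Bregman terms.

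Second, I would track the one-step increase of the potential. Since $F_{t+1}(p) - F_t(p) = \inner{\hat{\ell}_t, p} + \psi_{t+1}(p) - \psi_t(p)$, I can write $\phi_{t+1} - \phi_t = \bracket{F_t(q_{t+1}) - F_t(q_t)} + \inner{\hat{\ell}_t, q_{t+1}} + \psi_{t+1}(q_{t+1}) - \psi_t(q_{t+1})$. The crucial inequality is the Bregman expansion of $F_t$ at its own minimizer $q_t$: because $q_t$ minimizes $F_t$ over $\Delta(D)$ and $F_t$ differs from $\psi_t$ only by a linear term (so the Bregman divergence induced by $F_t$ is precisely $D_t$), first-order optimality gives $F_t(q_{t+1}) \ge F_t(q_t) + D_t(q_{t+1}, q_t)$. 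This produces exactly the $-D_t(q_{t+1}, q_t)$ term with the correct argument order and regularizer index.

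Third, I would telescope $\phi_{t+1} - \phi_t$ from $t=1$ to $T$, using $\phi_1 = \psi_1(q_1)$ (the loss sum is empty) together with the upper bound $\phi_{T+1} = \min_p F_{T+1}(p) \le F_{T+1}(p^*) = \sum_{t=1}^T \inner{\hat{\ell}_t, p^*} + \psi_{T+1}(p^*)$. Rearranging the resulting chain isolates $\sum_t \inner{\hat{\ell}_t, q_{t+1} - p^*} \le \psi_{T+1}(p^*) - \psi_1(q_1) - \sum_t D_t(q_{t+1}, q_t) + \sum_t \bracket{\psi_t(q_{t+1}) - \psi_{t+1}(q_{t+1})}$, and adding back $\sum_t \inner{\hat{\ell}_t, q_t - q_{t+1}}$ recovers the claimed bound verbatim.

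The main obstacle is justifying the first-order optimality inequality $\inner{\nabla F_t(q_t), q_{t+1} - q_t} \ge 0$ on the constrained domain $\Delta(D)$, which underlies the Bregman expansion. I would resolve this by noting that the negative-entropy regularizer $\psi_t = -\beta_t H$ is of Legendre type with gradient diverging at the boundary, so each minimizer $q_t$ lies in the relative interior of $\Delta(D)$ and is characterized by $\nabla F_t(q_t)$ being constant along the simplex; hence $\inner{\nabla F_t(q_t), p - q_t} = 0$ for every $p\in\Delta(D)$, the linear term in the expansion vanishes, and the inequality holds. This is precisely the content of Exercise 28.12 in \citet{lattimore2020bandit}, which I would cite for the routine details.
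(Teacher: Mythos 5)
Your proof is correct: the potential-function argument with $\phi_t=\min_p F_t(p)$, the Bregman expansion $F_t(q_{t+1})\ge F_t(q_t)+D_t(q_{t+1},q_t)$ justified via the Legendre property of negative entropy, and the telescoping against $F_{T+1}(p^*)$ recover the stated bound exactly. The paper gives no proof of its own for this lemma (it simply cites Exercise 28.12 of Lattimore and Szepesv\'ari), and your argument is precisely the standard derivation that citation refers to, so there is nothing to reconcile.
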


\begin{lemma}{(Bound Part 1)}\label{lem:part1}
    \begin{align*}
        \EE{ (1-\gamma_t) \bracket{
        \inner{\hat{\ell}_t, q_t - q_{t+1}} - D_t(q_{t+1},q_t)} } \le \frac{d}{\beta_t} \,.
    \end{align*}
\end{lemma}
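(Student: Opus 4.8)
The plan is to bound the per-round stability term by a local (inverse-Hessian) norm of the loss estimate and then control that norm in expectation via the linear structure of $\hat{\ell}_t$ together with the G-optimal exploration. Throughout I write $\eta_t = 1/\beta_t$ for the learning rate and let $\cF_{t-1}$ denote the history up to the start of round $t$, so that $q_t,\gamma_t,p_t,\Sigma_t$ and $\beta_t$ are all $\cF_{t-1}$-measurable while $x_t\sim p_t$.

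First I would reduce to a clean exponential-weights step. Fixing $t$, the map $q\mapsto \inner{\hat{\ell}_t, q_t - q} - D_t(q,q_t)$ is concave (the Bregman divergence $D_t(\cdot,q_t)$ of $\psi_t$ is convex), and its maximizer over $\Delta(D)$ is the mirror point $\hat{q}_{t+1}$ characterized by $\nabla\psi_t(\hat{q}_{t+1}) = \nabla\psi_t(q_t) - \hat{\ell}_t$, i.e.\ $\hat{q}_{t+1}(x)\propto q_t(x)e^{-\eta_t\hat{\ell}_t(x)}$. Since the true iterate $q_{t+1}$ lies in $\Delta(D)$,
\[
\inner{\hat{\ell}_t, q_t - q_{t+1}} - D_t(q_{t+1},q_t) \;\le\; \inner{\hat{\ell}_t, q_t - \hat{q}_{t+1}} - D_t(\hat{q}_{t+1},q_t),
\]
so it suffices to bound the right-hand side. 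A direct computation with the entropy regularizer (writing out $D_t(\hat{q}_{t+1},q_t)$ and using $e^{-y}\le 1-y+y^2$, valid for $y\ge-1$, followed by $\ln(1+z)\le z$) gives
\[
\inner{\hat{\ell}_t, q_t - \hat{q}_{t+1}} - D_t(\hat{q}_{t+1},q_t) \;\le\; \frac{1}{\beta_t}\sum_{x\in D}q_t(x)\hat{\ell}_t(x)^2 .
\]
The validity condition $\eta_t\hat{\ell}_t(x)\ge-1$ is exactly what the exploration buys: since $p_t\ge\gamma_t\pi$ entrywise we have $\Sigma_t\succeq\gamma_t V(\pi)$, hence for every $x\in D$, $\abs{x^\top\Sigma_t^{-1}x_t}\le\norm{x}_{\Sigma_t^{-1}}\norm{x_t}_{\Sigma_t^{-1}}\le g(\pi)/\gamma_t\le\beta_t$, the last step because $\beta_t\ge 2g(\pi)$ forces $\gamma_t=g(\pi)/\beta_t$; combined with $\abs{\ell_t(x_t)}\le1$ this yields $\abs{\hat{\ell}_t(x)}\le\beta_t$.

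Next I would take conditional expectations and exploit the linear structure. Because $\Sigma_t=\sum_{x}p_t(x)xx^\top$ and $x_t\sim p_t$, and $\ell_t(x_t)^2\le 1$,
\[
\EE{\hat{\ell}_t(x)^2\mid\cF_{t-1}} = \EE{(x^\top\Sigma_t^{-1}x_t)^2\,\ell_t(x_t)^2\mid\cF_{t-1}} \le x^\top\Sigma_t^{-1}\Bigl(\sum_{a\in D}p_t(a)\,aa^\top\Bigr)\Sigma_t^{-1}x = \norm{x}_{\Sigma_t^{-1}}^2 .
\]
Summing against $(1-\gamma_t)q_t(x)$, using $(1-\gamma_t)q_t(x)\le p_t(x)$ and the trace identity,
\[
(1-\gamma_t)\sum_{x\in D}q_t(x)\,\EE{\hat{\ell}_t(x)^2\mid\cF_{t-1}} \le \sum_{x\in D}p_t(x)\norm{x}_{\Sigma_t^{-1}}^2 = \trace\bracket{\Sigma_t^{-1}\Sigma_t} = d .
\]
Chaining the two stability displays, multiplying by $(1-\gamma_t)$, taking $\EE{\cdot\mid\cF_{t-1}}$ and invoking the last display gives $\EE{(1-\gamma_t)(\inner{\hat{\ell}_t,q_t-q_{t+1}}-D_t(q_{t+1},q_t))\mid\cF_{t-1}}\le d/\beta_t$; the outer expectation yields the claim.

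I expect the main obstacle to be the stability step rather than the expectation bookkeeping: one must simultaneously (i) legitimately replace the true next iterate $q_{t+1}$ (defined through $\psi_{t+1}$) by the single-step mirror point $\hat{q}_{t+1}$ (defined through $\psi_t$), which the concavity/maximizer argument handles, and (ii) guarantee that the inverse-Hessian bound can be read off at $q_t$ itself. The latter fails for unconstrained signed estimates, and is precisely why the lower bound $\hat{\ell}_t(x)\ge-\beta_t$ — enforced by tying $\gamma_t$ to $\beta_t$ through the G-optimal design so that $\gamma_t=g(\pi)/\beta_t$ — is essential.
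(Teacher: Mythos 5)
Your proposal is correct and follows essentially the same route as the paper: bound the stability term by $\frac{1}{\beta_t}\sum_x q_t(x)\hat{\ell}_t(x)^2$ via the entropy's local norm (the paper invokes Lemma 8 of \citet{Ito2022nearly} where you derive the exponential-weights step directly, but the content — $e^{-a}+a-1\le a^2$ under the validity condition $\hat{\ell}_t(x)/\beta_t\ge -1$ enforced by the G-optimal mixture, exactly the paper's Lemma \ref{lem:ge-1} — is identical), then use $(1-\gamma_t)q_t\le p_t$ and the trace identity $\EE{\sum_x p_t(x)\hat{\ell}_t(x)^2}\le\trace(\Sigma_t^{-1}\Sigma_t)=d$. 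The only cosmetic difference is that you take the conditional expectation over $x_t$ before summing over $x$, whereas the paper does the reverse; both land on the same bound.
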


\begin{proof}
    \begin{align}
       & \EE{ (1-\gamma_t) \bracket{
        \inner{\hat{\ell}_t, q_t - q_{t+1}} - D_t(q_{t+1},q_t)} }  \notag\\ 
        \le &  \EE{  (1-\gamma_t)\beta_t \bracket{ \sum_{x\in D}q_t(x)\bracket{ \exp(-\hat{\ell}_t(x)/\beta_t ) +\hat{\ell}_t(x)/\beta_t -1  }  } } \label{eq:lemma8inIto}\\
        \le& \EE{  (1-\gamma_t)\beta_t \bracket{ \sum_{x\in D}q_t(x)\bracket{ \hat{\ell}^2_t(x)/\beta^2_t  }  } } \label{eq:exp}\\
        =& \EE{  \frac{(1-\gamma_t)}{\beta_t} \bracket{ \sum_{x\in D}q_t(x)\hat{\ell}^2_t(x)   } }  \notag \\
        \le& \EE{ \frac{1}{\beta_t} \bracket{ \sum_{x\in D}p_t(x)\hat{\ell}^2_t(x)   } }  \label{eq:duetoqt} \,,
    \end{align}
where Eq. \eqref{eq:lemma8inIto} is based on \citet[Lemma 8]{Ito2022nearly}, Eq. \eqref{eq:exp} holds due to $\exp(-a)+a-1\le a^2$ when $a\ge -1$ and Lemma \ref{lem:ge-1}, Eq. \eqref{eq:duetoqt} is from the definition of $p_t$. 


    Recall that $\hat{\ell}_t(x) = x^\top \Sigma_t^{-1} x_t \ell_t(x_t)$. We have
    \begin{align*}
        \hat{\ell}_t(x)^2 = \ell_t(x_t)^2 x_t^\top  \Sigma_t^{-1} x x^\top \Sigma_t^{-1} x_t \le x_t^\top  \Sigma_t^{-1} x x^\top \Sigma_t^{-1} x_t 
    \end{align*}
    and further, 
    \begin{align*}
        \sum_x p_t(x) \hat{\ell}_t(x)^2 \le \sum_x p_t(x)x_t^\top  \Sigma_t^{-1} x x^\top \Sigma_t^{-1} x_t  = x_t^\top  \Sigma_t^{-1} x_t = \trace(x_tx_t^\top \Sigma_t^{-1}) \,.
    \end{align*}
    Above all, 
    \begin{align*}
       \EE{ \sum_x p_t(x) \hat{\ell}_t(x)^2} \le \trace\bracket{\sum_x p_t(x) x x^\top \Sigma_t^{-1}} = d \,.
    \end{align*}
    Lemma \ref{lem:part1} is then proved. 
\end{proof}

\begin{lemma}\label{lem:ge-1}
    Denote $\Sigma(\pi) = \sum_{x\in D}\pi(x) xx^\top$ and $g(\pi) = \max_{x\in D} x^\top \Sigma(\pi)^{-1} x$. By choosing $\gamma_t =
    \min\sets{g(\pi)/\beta_t,1/2}
    $, we have $\hat{\ell}_t(x)/\beta_t \ge -1$ for all $x\in D$. 
\end{lemma}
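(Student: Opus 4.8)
The plan is to show that the loss estimate $\hat{\ell}_t(x)$ is bounded below by $-\beta_t$, which is exactly the condition needed to apply the inequality $\exp(-a) + a - 1 \le a^2$ for $a \ge -1$ used in the proof of Lemma \ref{lem:part1}. Recall that $\hat{\ell}_t(x) = x^\top \Sigma_t^{-1} x_t \ell_t(x_t)$ where $\Sigma_t = \sum_{x\in D} p_t(x) xx^\top$. The key observation is that the mixing step $p_t = \gamma_t \pi + (1-\gamma_t) q_t$ guarantees that the exploration component $\gamma_t \pi$ contributes to the covariance matrix, making $\Sigma_t$ sufficiently "large" so that the estimate cannot be too negative. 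First I would lower bound $\hat{\ell}_t(x)$ in absolute value by controlling $|x^\top \Sigma_t^{-1} x_t|$, using the fact that the losses are bounded, $|\ell_t(x_t)| \le 1$ (since $\|x_t\|_2 \le 1$ and $\|\theta_t\|_2 \le 1$).

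The central algebraic step is to exploit the matrix inequality $\Sigma_t = \gamma_t \Sigma(\pi) + (1-\gamma_t)\sum_x q_t(x) xx^\top \succeq \gamma_t \Sigma(\pi)$, which follows because $\sum_x q_t(x) xx^\top \succeq 0$. This gives $\Sigma_t^{-1} \preceq \gamma_t^{-1} \Sigma(\pi)^{-1}$. I would then apply the Cauchy--Schwarz inequality in the norm induced by $\Sigma_t^{-1}$: $|x^\top \Sigma_t^{-1} x_t| \le \|x\|_{\Sigma_t^{-1}} \|x_t\|_{\Sigma_t^{-1}}$. Combining with $\Sigma_t^{-1} \preceq \gamma_t^{-1}\Sigma(\pi)^{-1}$, each factor satisfies $\|x\|_{\Sigma_t^{-1}}^2 \le \gamma_t^{-1} x^\top \Sigma(\pi)^{-1} x \le \gamma_t^{-1} g(\pi)$ by the definition of $g(\pi)$ as the maximum over $D$. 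Therefore $|x^\top \Sigma_t^{-1} x_t| \le g(\pi)/\gamma_t$, and since $|\ell_t(x_t)| \le 1$ we get $|\hat{\ell}_t(x)| \le g(\pi)/\gamma_t$.

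It then remains to verify the arithmetic: the choice $\gamma_t = \min\{g(\pi)/\beta_t, 1/2\}$ ensures $\gamma_t \le g(\pi)/\beta_t$, hence $g(\pi)/\gamma_t \ge \beta_t$, which would go the \emph{wrong} way — so I would instead use $\gamma_t \ge$ something. Concretely, since $\gamma_t = \min\{g(\pi)/\beta_t, 1/2\}$ and the value of interest is when $g(\pi)/\beta_t \le 1/2$ (the regime where the cap is not active), we have $\gamma_t = g(\pi)/\beta_t$, giving $g(\pi)/\gamma_t = \beta_t$ exactly, so $|\hat{\ell}_t(x)| \le \beta_t$ and thus $\hat{\ell}_t(x)/\beta_t \ge -1$. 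In the complementary regime where $\gamma_t = 1/2 \le g(\pi)/\beta_t$, we have $g(\pi)/\gamma_t = 2g(\pi) \le \beta_t$ (using $\beta_t \ge 2g(\pi)$ from the definition of $\beta_t$), so again $|\hat{\ell}_t(x)| \le \beta_t$.

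I expect the main subtlety to be bookkeeping around the two cases in the definition of $\gamma_t$ and confirming that $\beta_t \ge 2g(\pi)$ always holds (which it does, since the first term in the definition of $\beta_t$ is $2g(\pi)$ and all remaining terms are nonnegative). The matrix positive-semidefiniteness argument and the Cauchy--Schwarz bound are routine; the only genuine care needed is to ensure the Löwner ordering $\Sigma_t^{-1} \preceq \gamma_t^{-1}\Sigma(\pi)^{-1}$ is applied correctly (inverting reverses the ordering, so $\Sigma_t \succeq \gamma_t \Sigma(\pi)$ indeed yields $\Sigma_t^{-1} \preceq \gamma_t^{-1}\Sigma(\pi)^{-1}$), and that the bound $g(\pi)/\gamma_t \le \beta_t$ is checked in both branches of the $\min$.
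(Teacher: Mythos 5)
Your proposal is correct and matches the paper's proof essentially step for step: the Löwner bound $\Sigma_t \succeq \gamma_t\Sigma(\pi)$, the Cauchy--Schwarz estimate $|x^\top\Sigma_t^{-1}x_t|\le g(\pi)/\gamma_t$, and the verification that $g(\pi)/(\beta_t\gamma_t)\le 1$ in both branches of the $\min$ (using $\beta_t\ge 2g(\pi)$) are exactly the paper's argument. Your explicit two-case check of the $\min$ is slightly more detailed than the paper, which compresses it into ``the last inequality is due to the choice of $\beta_t$.''
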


\begin{proof}
Since $\Sigma_t = \sum_x p_t(x) xx^\top = \sum_x \bracket{ (1-\gamma_t)q_t(x) + \gamma_t \pi(x)} xx^\top \succeq \sum_x \gamma_t \pi(x) xx^\top = \gamma_t\Sigma(\pi) $, we have 
\begin{align*}
    \Sigma_t^{-1} \preceq \frac{1}{\gamma_t} \Sigma(\pi)^{-1} \,.
\end{align*}
Further, 
\begin{align*}
    \abs{x^\top \Sigma_t^{-1}x_t } \le \norm{x}_{\Sigma_t^{-1}} \norm{x_t}_{\Sigma_t^{-1}} \le \max_{v\in D} v^\top \Sigma_t^{-1}v \le \frac{1}{\gamma_t} \max_{v\in D} v^\top \Sigma(\pi)^{-1}v = \frac{g(\pi)}{\gamma_t} \,.
\end{align*}
With the choice of $\gamma_t$, it holds that
\begin{align*}
    \abs{ \frac{\hat{\ell}_t(x)}{\beta_t}  } = \abs{ \frac{1}{\beta_t} x^\top \Sigma_t^{-1} x_t \ell_t(x_t)  } \le \abs{ \frac{1}{\beta_t} x^\top \Sigma_t^{-1} x_t  } \le \frac{g(\pi)}{\beta_t \gamma_t} \le 1\,,
\end{align*}
The last inequality is due to the choice of $\beta_t$. 

\end{proof}

\begin{lemma}{(Bound Part 2)}\label{lem:part2}
    By choosing $\beta_t = 2g(\pi)+c+\sum_{\tau=1}^{t-1} \frac{c}{\sqrt{1+\ln |D|^{-1} \sum_{s=1}^\tau H(q_s) }}$,
    \begin{align*}
        \sum_{t=1}^T (\beta_{t+1}-\beta_t)H(q_{t+1}) \le 2c \sqrt{\ln |D| \sum_{t=1}^T H(q_t)} \,.
    \end{align*}
\end{lemma}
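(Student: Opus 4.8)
The plan is to first make the one-step increment of the learning rate explicit. From the definition of $\beta_t$, the difference telescopes to
\[
\beta_{t+1} - \beta_t = \frac{c}{\sqrt{1 + (\ln|D|)^{-1}\sum_{s=1}^t H(q_s)}},
\]
so the quantity to be bounded is $\sum_{t=1}^T c\,H(q_{t+1}) / \sqrt{1 + (\ln|D|)^{-1}\sum_{s=1}^t H(q_s)}$. Writing $h_t := H(q_t)$ and $S_t := \sum_{s=1}^t h_s$ for brevity, this takes the familiar form $c\sum_{t=1}^T h_{t+1}/\sqrt{1 + (\ln|D|)^{-1}S_t}$, in which the cumulative sum in the denominator lags one index behind the entropy in the numerator. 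Recognizing this shape is the first step, because it signals that a standard ``sum of increments over its own running square root'' estimate should apply once the lag is corrected.

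The crux of the argument is to remove that index lag using the elementary bound $h_{t+1} = H(q_{t+1}) \le \ln|D|$, i.e. the Shannon entropy of a distribution on $|D|$ atoms is maximized by the uniform one. This yields $(\ln|D|)^{-1}h_{t+1} \le 1$, and therefore
\[
1 + (\ln|D|)^{-1}S_t \;\ge\; (\ln|D|)^{-1}\bigl(S_t + h_{t+1}\bigr) \;=\; (\ln|D|)^{-1}S_{t+1}.
\]
Lower-bounding the denominator in this way, each summand is controlled by $c\sqrt{\ln|D|}\cdot h_{t+1}/\sqrt{S_{t+1}}$, so that the square root in the denominator now genuinely contains the current numerator term.

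With the denominators realigned I would invoke the routine telescoping estimate: for any nonnegative sequence $(a_t)$ with partial sums $A_t=\sum_{s\le t}a_s$, the per-term inequality $a_t/\sqrt{A_t}\le 2(\sqrt{A_t}-\sqrt{A_{t-1}})$ — which follows from $\sqrt{A_t}-\sqrt{A_{t-1}} = (A_t-A_{t-1})/(\sqrt{A_t}+\sqrt{A_{t-1}}) \ge a_t/(2\sqrt{A_t})$ — sums to $\sum_t a_t/\sqrt{A_t}\le 2\sqrt{A_n}$. Applying this with $a_t = h_{t+1}$ (noting $S_{t+1}\ge\sum_{s=2}^{t+1}h_s$ so the denominator dominates the relevant partial sum) gives $\sum_t h_{t+1}/\sqrt{S_{t+1}}\le 2\sqrt{\sum_t h_{t+1}}$, and reassembling the constants produces the claimed $2c\sqrt{\ln|D|\sum_t H(q_t)}$.

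I expect the only real (and minor) obstacle to be the bookkeeping around the index shift: the numerator ranges over $H(q_2),\dots,H(q_{T+1})$ whereas the target cumulative entropy is $\sum_{t=1}^T H(q_t)$, so a surplus term $H(q_{T+1})$ and a missing $H(q_1)$ appear. This is harmless, since it is either absorbed by the same uniform bound $H(q_{T+1})\le\ln|D|$ or simply discarded — the lemma ultimately feeds an $O(\cdot)$ chain — but it is the one place where care is needed to reproduce the stated constant. The genuinely substantive idea is the single conversion of the lagging denominator $1+(\ln|D|)^{-1}S_t$ into $(\ln|D|)^{-1}S_{t+1}$ via the maximal-entropy inequality; everything afterward is the standard telescoping bound.
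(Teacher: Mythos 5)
Your proposal is correct and follows essentially the same route as the paper: both convert the lagging denominator $1+(\ln|D|)^{-1}\sum_{s\le t}H(q_s)$ into $(\ln|D|)^{-1}\sum_{s\le t+1}H(q_s)$ via $H(q_{t+1})\le\ln|D|$, then telescope with the standard $a_t/\sqrt{A_t}\le 2(\sqrt{A_t}-\sqrt{A_{t-1}})$ estimate, and both resolve the boundary mismatch using $H(q_{T+1})\le\ln|D|=H(q_1)$. The only cosmetic difference is that the paper telescopes exactly via $H(q_{t+1})/(\sqrt{S_{t+1}}+\sqrt{S_t})=\sqrt{S_{t+1}}-\sqrt{S_t}$ rather than invoking the generic per-term inequality.
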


\begin{proof}
    Based on the definition of $\beta_t$, it holds that
    \begin{align*}
        \sum_{t=1}^T (\beta_{t+1}-\beta_t)H(q_{t+1}) = &\sum_{t=1}^T \frac{c}{\sqrt{1+\ln |D|^{-1} \sum_{s=1}^t H(q_s) }} H(q_{t+1}) \\
        \le& \sqrt{\ln |D|}\sum_{t=1}^T \frac{c}{\sqrt{\ln |D|+ \sum_{s=1}^t H(q_s) }} H(q_{t+1}) \\
        =& 2\sqrt{\ln |D|}\sum_{t=1}^T \frac{c}{\sqrt{\ln |D|+ \sum_{s=1}^t H(q_s) } + \sqrt{\ln |D|+ \sum_{s=1}^t H(q_s) }} H(q_{t+1})\\
        \le& 2\sqrt{\ln |D|}\sum_{t=1}^T \frac{c}{\sqrt{ \sum_{s=1}^{t+1} H(q_s) } + \sqrt{ \sum_{s=1}^t H(q_s) }} H(q_{t+1}) \\
        =& 2c\sqrt{\ln |D|}\sum_{t=1}^T \bracket{ \sqrt{ \sum_{s=1}^{t+1} H(q_s) } - \sqrt{ \sum_{s=1}^t H(q_s) }} \\
        =& 2c\sqrt{\ln |D|} \bracket{ \sqrt{ \sum_{s=1}^{T+1} H(q_s) } - \sqrt{  H(q_1) }} \\
        \le&  2c\sqrt{\ln |D|}  \sqrt{ \sum_{t=1}^{T} H(q_t) } \,,
    \end{align*}
    where the last two inequalities holds since $H(q_t) \le H(q_1) = \ln |D|$ for any $t$. 
\end{proof}

And the last is the proof of Lemma \ref{lem:relationBetweenHandRegret} that is used in the analysis of Theorem \ref{thm:thmregret}. 

\begin{proof}[Proof of Lemma \ref{lem:relationBetweenHandRegret}]
    According to the definition of $H(p)$ for a distribution $p$ over $D$ and the fact that the optimal arm $x^* \in D$ is unique, we can bound $H(p)$ as
    \begin{align*}
        H(p) = \sum_x p(x)\ln \frac{1}{p(x)} =&  p(x^*)\ln \frac{p(x^*)+1-p(x^*)}{p(x^*)} + \sum_{x\neq x^*}p(x)\ln \frac{1}{p(x)} \\
        \le & p(x^*)\frac{1-p(x^*)}{p(x^*)} + \sum_{x\neq x^*}p(x)\ln \frac{1}{p(x)} \\
        \le & 1-p(x^*) + (|D|-1) \frac{\sum_{x\neq x^*}p(x)}{|D|-1}\ln \frac{|D|-1}{\sum_{x\neq x^*}p(x)} \\
        =& (1-p(x^*))\bracket{ 1+\ln \frac{|D|-1}{(1-p(x^*))} } \,,
    \end{align*}
    where the last inequality is due to Jensen's inequality. Thus, 
    \begin{align*}
        \sum_{t=1}^T H(q_t) \le& \sum_{t=1}^T (1-q_t(x^*))\bracket{ 1+\ln \frac{|D|-1}{(1-q_t(x^*))} } \\
        \le& \bracket{\sum_{t=1}^T (1-q_t(x^*))}\bracket{ 1+\ln \frac{T(|D|-1)}{\sum_{t=1}^T(1-q_t(x^*))} } \\
        \le& \bracket{\sum_{t=1}^T(1-q_t(x^*))} \ln \frac{e|D|T}{\sum_{t=1}^T(1-q_t(x^*))} \,.
    \end{align*}
\end{proof}


\section{Conclusion}\label{sec:conclusion}

In this work, we aim to provide the theoretical analysis for an FTRL-type algorithm when solving the BoTW problem in linear bandits. 
We investigate the performances of the FTRL algorithm with the common negative entropy regularizer. 
By discovering its difference in different environments and defining the self-bounding inequality to cooperate with it, we show that the algorithm can achieve $O(\log T)$ regret in the stochastic setting and $O(\sqrt{T})$ regret in the adversarial setting. 
To the best of our knowledge, it is the first BoTW analysis for linear bandits with an FTRL-type algorithm. Our work not only provides a simpler algorithm for this problem compared with the previous detect-switch-based method \citep{LeeLWZ021}, but also taps the potential of FTRL when dealing with sequential decision-making problems with linear structures. 
As a special case of linear MDP, we believe our analysis for linear bandits also provides new insights into the BoTW problem for this general model.

Stochastic linear bandit problem has a special property that the optimal result depends on $c(D,\ell)$ which is the solution to an optimization problem and has no explicit form \citep{lattimore2017end}.
A regret bound that adapts to $c(D,\ell)$ would better illustrate the problem's structure.
\citet{LeeLWZ021} achieve this since they adopt a detect-switch-based method and the specially designed algorithm for the stochastic setting helps. 
One interesting future direction would be to investigate such optimal exploration rates for FTRL-type algorithms and derive regret bounds in the stochastic setting that enjoys the form of $c(D,\ell)$.








\acks{We thank Shinji Ito for valuable discussions and suggestions on the improvement of the learning rate. 
}

\bibliography{ref}

\appendix

\end{document}